\documentclass{article}
\PassOptionsToPackage{numbers, compress}{natbib}

\usepackage[main, final]{neurips_2026}

\usepackage[utf8]{inputenc} 
\usepackage[T1]{fontenc}    
\usepackage{hyperref}       
\usepackage{url}            
\usepackage{booktabs}       
\usepackage{amsfonts}       
\usepackage{nicefrac}       
\usepackage{microtype}      
\usepackage[table,xcdraw]{xcolor}        

\usepackage{graphicx} 
\usepackage{amsmath}
\usepackage{algorithm}
\usepackage{algorithmic}

\usepackage{graphicx}
\usepackage{booktabs}
\usepackage{multirow}
\usepackage{colortbl}
\usepackage{arydshln}
\usepackage{pifont}

\definecolor{Gray}{gray}{0.9}
\definecolor{light}{HTML}{E6F0FF}

\newcommand{\cmark}{\ding{51}}%
\newcommand{\xmark}{\ding{55}}%

\newcommand{\equal}{\textsuperscript{*}}        
\newcommand{\corrauth}{\textsuperscript{\textdagger}} 
\newcommand{\sjtu}{\textsuperscript{1}}         
\newcommand{\tongji}{\textsuperscript{2}}       

\usepackage{amssymb}
\usepackage{amsthm}

\newtheorem{proposition}{Proposition}
\usepackage{caption}
\usepackage{subcaption}

\title{From Label Priors to Task Evidence: Long-Video Frame Selection via Bayesian GRPO}

\author{
  \textbf{Shuochen Chang}\sjtu\equal \quad
  \textbf{Bingjie Gao}\sjtu\equal \quad
  \textbf{Qingyang Liu}\sjtu \quad
  \textbf{Qianli Ma}\sjtu \quad
  \textbf{Yibo Miao}\sjtu \\
  \textbf{Haonan Zhao}\sjtu \quad
  \textbf{Zhaohe Liao}\sjtu \quad
  \textbf{Xiaofeng Zhang}\sjtu \quad
  \textbf{Jiangtong Li}\tongji\corrauth \quad
  \textbf{Li Niu}\sjtu\corrauth \\
  \sjtu Shanghai Jiao Tong University \quad \tongji Tongji University \\
  \equal Equal contribution. \quad \corrauth Corresponding authors. \\
  \texttt{\{csc1332741686, whynothaha, ustcnewly\}@sjtu.edu.cn} \\
  \texttt{jiangtongli@tongji.edu.cn}
}

\begin{document}

\maketitle

\begin{abstract}
Multimodal Large Language Models (MLLMs) face significant scalability challenges in long-video understanding due to quadratic attention complexity and context window constraints. Existing retrieval-based approaches typically rely on supervised learning from static keyframe annotations. However, these label priors often fail to align with the actual information required by downstream MLLMs for complex reasoning, creating a misalignment between the training target and inference needs. To bridge this gap, we propose a probabilistic framework that reformulates frame selection as latent-variable inference. We first initialize a lightweight policy using supervised learning to capture general visual saliency. Then, we treat a frozen MLLM as a task environment and use its answer likelihood as dense task evidence to refine the policy via Bayesian Group Relative Policy Optimization (GRPO). Within this formulation, the negative log-likelihood (NLL) over ground-truth answer tokens provides a principled dense reward, while a prior-matching KL term preserves semantic consistency with the elicited label prior. This process transitions the selection principle from static label priors to task evidence. Extensive experiments on Video-MME, MLVU, and LongVideoBench demonstrate that our selector achieves competitive accuracy with MLLM-based scorers while retaining the inference efficiency of lightweight methods. The selector is trained once with a frozen MLLM environment and transfers directly to multiple downstream MLLMs without retraining. Code is available at \url{github.com/Miraclecsc/Frame_Selection_GRPO}.
\end{abstract}

\section{Introduction}
\label{sec:intro}

Recent advancements in Multimodal Large Language Models (MLLMs) have revolutionized video understanding~\cite{bg1,bg2,bg3,bg4,bg5,bg6,bg7,bg8}, enabling sophisticated reasoning over interwoven visual-textual data. However, despite their impressive zero-shot capabilities and instructional alignment, MLLMs face a fundamental struggle when processing long-form videos~\cite{q1,q2,q3,q4}. The quadratic computational complexity inherent to self-attention mechanisms~\cite{attn1}, coupled with strict context window limitations, typically prohibits the direct ingestion of lengthy, uncompressed frame sequences. Consequently, extracting relevant spatiotemporal information from temporally extended content remains a critical bottleneck, hindering the deployment of MLLMs in real-world applications such as movie analysis and long-horizon agent navigation.

To mitigate this scalability issue, existing approaches~\cite{arc1,arc2,chen2026stdec,arc3,re5,re6,huang2026does} generally fall into two dominant paradigms: architectural modification and external frame retrieval. Architectural approaches, such as streaming video LLMs~\cite{arc4,arc5,arc6,arc7,Huang2026RealTimeAR} or dynamic memory compression, attempt to fundamentally modify the MLLM's internal token-processing mechanism to accommodate infinite or highly compressed contexts. While theoretically promising, these methods often incur prohibitive training costs and suffer from catastrophic forgetting, degrading the model's general-purpose capabilities. Alternatively, retrieval-based methods~\cite{re1,re2,re3,re4} bypass architectural changes by employing an external module to select a concise subset of keyframes. However, this paradigm suffers from a severe efficiency-accuracy trade-off. Lightweight selectors (\emph{e.g.}, CLIP-based similarity matchers)~\cite{aks,liu2026palm,bolt,f2c,frameoracle,frame-voyager,huang2025adaptive} rely on shallow semantic alignment and lack the deep causal reasoning required for complex, multi-hop temporal queries. Conversely, utilizing a powerful MLLM as a dense frame scorer~\cite{air,keyframes,gens,viarl} bridges the reasoning gap but introduces unacceptable computational overhead and latency during inference.

In this work, we propose a lightweight, plug-and-play selection module that achieves high reasoning accuracy without imposing heavy inference latency. We identify a fundamental theoretical misalignment in current retrieval-based methods: \textit{a static label prior is not strictly identical to task evidence}. Standard supervised selectors trained on fixed human annotations learn the former (typically visually salient or summary frames), but downstream MLLMs require the latter (specific visual evidence necessary to complete a logical reasoning chain). To bridge this semantic gap, we depart from a ranking paradigm and reformulate frame selection as a probabilistic latent-variable modeling problem.

Specifically, we introduce a rigorous two-stage probabilistic framework that transitions from human-derived label priors to MLLM-driven task evidence. First, we construct a lightweight selector that learns a \textit{Label Prior} from keyframe annotations via pairwise ranking. This initializes the policy within a semantically plausible region, drastically reducing the combinatorial search space. Second, we perform \textit{Posterior Inference} to dynamically align this prior with the target MLLM. By treating the frozen MLLM as the environment and maximizing the Evidence Lower Bound (ELBO), we naturally utilize the MLLM's Negative Log-Likelihood (NLL) as dense task evidence, while simultaneously leveraging the elicited prior for principled KL-regularization. We solve this optimization using a novel Bayesian Group Relative Policy Optimization (GRPO) algorithm, which effectively refines the policy into a task-adaptive posterior that autonomously identifies reasoning-critical frames without reward hacking.

Our main contributions are summarized as follows:
\begin{itemize}
    \item We propose a plug-and-play, lightweight frame selector that decouples selection from inference, enabling highly efficient long-video understanding without modifying the target MLLM's architecture.
    \item We introduce a probabilistic framework that reformulates frame selection as Bayesian inference, theoretically justifying the sequential transition from supervised Prior Elicitation to MLLM-driven Posterior Inference.
    \item We derive a novel Bayesian GRPO algorithm. By leveraging the target MLLM's answer likelihood as dense task evidence and the label prior as KL-regularization, our method robustly refines the selector while mitigating reward hacking.
    \item We empirically demonstrate that our selector acts as a universal plugin across multiple downstream MLLMs, achieving a superior accuracy-efficiency trade-off compared to both training-free lightweight methods and computationally heavy MLLM-based scorers.
\end{itemize}
\section{Related Works}

\subsection{MLLM for Long Video Understanding}
Recent advancements in Multimodal Large Language Models (MLLMs) have significantly improved video understanding~\cite{rela1,shen2026decoding,rela2,rela3,zhang2026heterogeneous,li2026sage}, yet processing long-form videos remains a computational bottleneck. Existing approaches largely focus on architectural modifications to handle extended context windows. For instance, StreamingVLM~\cite{arc1} introduces a streaming-aware mechanism to enable infinite-length video processing without retraining, while Video-XL~\cite{videoxl,videoxlp,yu2025core3d,videoxl2} utilizes latent summarization tokens to compress hour-scale visual inputs efficiently. Other works like TimeSuite~\cite{timesuite,wang2026solving} propose grounded tuning strategies to adapt short-form MLLMs for long contexts. These architecture-centric methods often require substantial computational resources or sacrifice fine-grained temporal reasoning capabilities, highlighting the need for more efficient, selection modules.

\subsection{Bayesian Methods in Large Models}
Bayesian frameworks~\cite{bayes3,bayes4,bayes5,fu2025learning,qin2026sstoken,qin2026multi} have seen a resurgence for quantifying uncertainty and integrating priors in large models. Recent studies such as Bayesian Evaluation of LLM Behavior~\cite{bayes1} and Bayesian Mixture of Experts~\cite{bayes2} demonstrate that probabilistic modeling can effectively calibrate model outputs and mitigate hallucinations. Recent works~\cite{bayes6,bayes7} formulates LLMs as prior action distributions, showing that Bayesian inference can significantly accelerate exploration in reinforcement learning. This line of research provides strong theoretical grounding for treating supervised signals as priors and downstream task feedback as evidence, a core tenet of our proposed framework.

\subsection{Policy Optimization and Reinforcement Learning for MLLMs}
Reinforcement learning has become a standard paradigm for aligning MLLMs~\cite{vrl1,vrl2,vrl3,vrl4,vrl5,huang2026deferred,shen2026fine,zhang2026xhc} with complex reasoning tasks. General-purpose reasoning has been enhanced by algorithms like Hint-GRPO~\cite{vrl6,li2026toward,zhang2026nitp}, which applies Group Relative Policy Optimization to boost mathematical and multimodal reasoning capabilities. In the specific domain of frame selection, recent works have explored various decision-making formulations. Focus~\cite{focus} models keyframe selection as a combinatorial multi-armed bandit problem to balance exploration and exploitation. Similarly, FrameMind~\cite{vrl7,yu2026elsa3d} employs trajectory-level RL to dynamically request visual information during reasoning. Unlike these methods which often rely on sparse rewards or heuristic bandits, our approach leverages a dense NLL-based reward signal within a rigorous Bayesian inference framework to achieve task-adaptive frame selection.

\section{Methodology}
\label{sec:method}

\begin{figure*}[t]
  \centering
  \includegraphics[width=\linewidth]{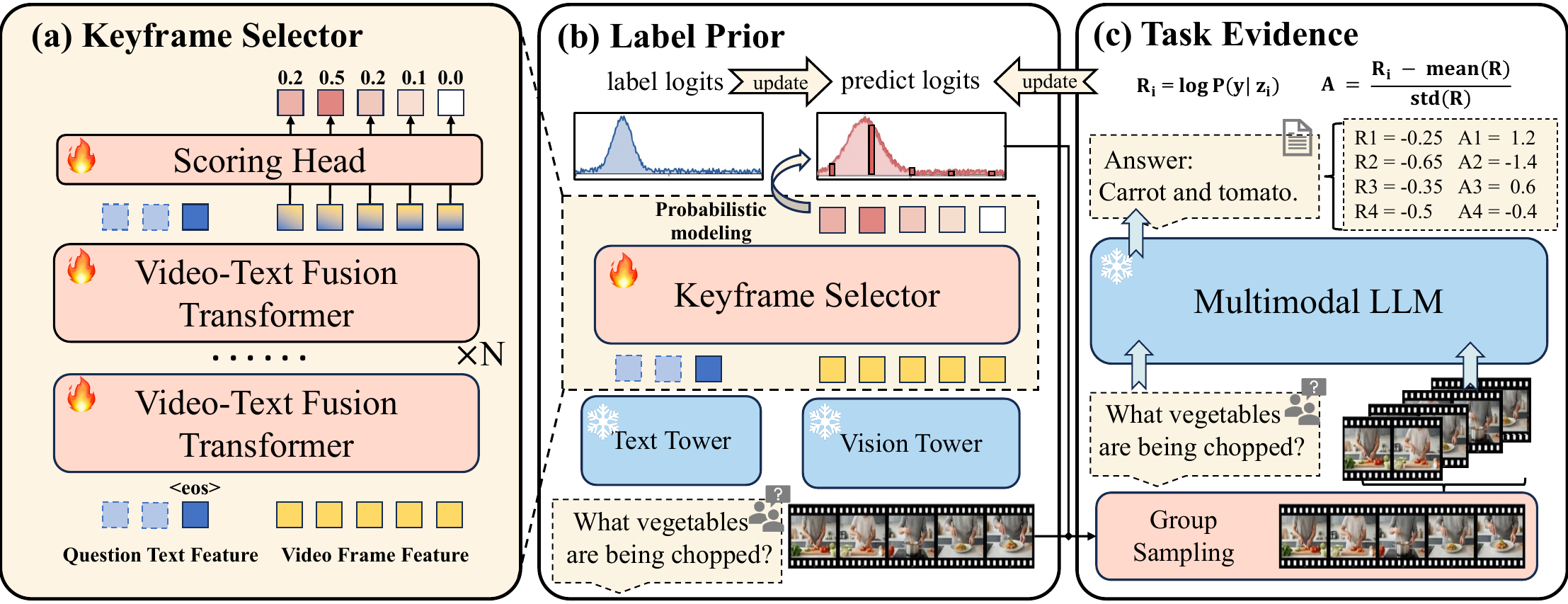}
  \caption{
    \textbf{The overall framework of our methods.} 
    \textbf{(a) Keyframe Selector.} The lightweight selector ($f_\theta$) comprises stacked Video-Text Fusion Transformers. It modulates frame features with query semantics to output importance logits, then parameterizing a Plackett-Luce distribution.
    \textbf{(b) Stage I: Label Prior Elicitation.} We utilize keyframe label annotations to construct a \textit{Label Prior}. The selector is trained via pairwise ranking loss to align its predicted logits with the label logits, establishing a robust initialization.
    \textbf{(c) Stage II:Task-Evidence Posterior Inference.} We freeze the MLLM and treat it as a frozen task environment. The selector performs Group Sampling to propose discrete frame subsets ($\mathbf{z}$). The MLLM evaluates these subsets to provide dense NLL rewards ($R$) and advantages ($A$). The policy is refined via \textbf{Bayesian GRPO} to maximize task evidence while maintaining KL-regularization towards the label prior.
  }
  \label{fig:pipeline}
\end{figure*}

We formulate long-video frame selection as a probabilistic latent-variable modeling problem, as shown in Figure~\ref{fig:pipeline}. We use $\mathcal{V} \in \mathbb{R}^{N \times D}$ to denote a sequence of $N$ candidate frames uniformly sampled from the raw video, $\mathcal{Q}$ to denote a textual query, and $\mathbf{y}$ to denote the ground-truth answer. Here, $N$ is reserved for the number of candidate frames, while $L$ denotes the number of Transformer blocks in the selector.

Our objective is to learn a parameterized policy $\pi_\theta(\mathbf{z} \mid \mathcal{V}, \mathcal{Q})$ that selects a discrete subset $\mathbf{z} \subset \{1, \dots, N\}$ with cardinality $|\mathbf{z}| = K$, maximizing the marginal likelihood of the correct answer under a frozen Multimodal Large Language Model.

\subsection{Variational Policy Generator}
\label{sec:model}

The true posterior distribution $P(\mathbf{z} \mid \mathbf{y}, \mathcal{V}, \mathcal{Q})$ is intractable due to the combinatorial nature of the discrete space $\binom{N}{K}$. To address this, we employ variational inference by introducing a variational family $q_\theta(\mathbf{z} \mid \mathcal{V}, \mathcal{Q})$ to approximate the posterior.

We implement the variational distribution using a lightweight selector network $f_\theta$. The network projects the visual and textual inputs into a sequence of unnormalized importance scores (logits) $\mathbf{s} \in \mathbb{R}^N$:
\begin{equation}
\label{eq:logits}
\mathbf{s} = f_\theta(\mathcal{V}, \mathcal{Q}).
\end{equation}
To define a valid probability measure over ordered discrete subsets without replacement, we parameterize $q_\theta$ using the Plackett-Luce model. The probability of sampling an ordered sequence $\mathbf{z} = (z_1, \dots, z_K)$ is factorized as:
\begin{equation}
\label{eq:pl_policy}
q_\theta(\mathbf{z} \mid \mathbf{s}) = \prod_{k=1}^{K} \frac{\exp(s_{z_k})}{\sum_{j \in \mathcal{U}_k} \exp(s_j)},
\end{equation}
where $\mathcal{U}_k = \{1, \dots, N\} \setminus \{z_1, \dots, z_{k-1}\}$ represents the set of available frames at step $k$. This formulation ensures a normalized probability distribution over the combinatorial action space, enabling consistent gradient estimation.

\subsection{Stage I: Prior Elicitation via Ranking}
\label{sec:prior}

In Bayesian inference, the quality of the prior distribution $p(\mathbf{z})$ critically dictates the efficiency of posterior convergence. A non-informative uniform prior over the vast search space $\binom{N}{K}$ results in severe sparsity issues. We therefore interpret the first training stage as \textbf{Prior Elicitation}, leveraging human-annotated keyframes to establish an informative prior $\pi_{\mathrm{prior}}(\mathbf{z})$.

We construct a pairwise preference dataset $\mathcal{D} = \{ (i,j) \mid y_i > y_j \}$ derived from human annotations. We model the local relative saliency using the Bradley-Terry-Luce model:
\begin{equation}
\label{eq:btl}
P(i \succ j \mid \mathbf{s}) = \frac{\exp(s_i)}{\exp(s_i) + \exp(s_j)} = \sigma(s_i - s_j).
\end{equation}
The parameters are estimated by minimizing the negative log-likelihood of the observed preferences:
\begin{equation}
\label{eq:prior_loss}
\mathcal{L}_{\mathrm{prior}}(\theta) = -\mathbb{E}_{(i,j) \sim \mathcal{D}} \left[ \log \sigma(s_i - s_j) \right].
\end{equation}
The resulting policy $\pi_{\mathrm{prior}}$ restricts the support of the latent distribution to a semantically plausible region, effectively initializing the variational approximation near the modes of human visual perception.

\subsection{Stage II: Task-Evidence Posterior Inference}
\label{sec:posterior}

While $\pi_{\mathrm{prior}}$ captures general saliency, it remains agnostic to the specific reasoning dynamics of the MLLM. To align the selection with task requirements, we perform posterior inference using the frozen MLLM as the likelihood function $P_{\mathrm{MLLM}}(\mathbf{y} \mid \mathbf{z}, \mathcal{Q})$.

\subsubsection{Derivation of the Evidence Lower Bound}

We aim to maximize the log-marginal likelihood of the ground-truth answer $\mathbf{y}$. By marginalizing over the latent variable $\mathbf{z}$, we have:
\begin{equation}
\label{eq:marginal}
\log P(\mathbf{y} \mid \mathcal{V}, \mathcal{Q}) = \log \sum_{\mathbf{z}} P_{\mathrm{MLLM}}(\mathbf{y} \mid \mathbf{z}) \pi_{\mathrm{prior}}(\mathbf{z}).
\end{equation}
Direct maximization of Eq.~\ref{eq:marginal} is intractable. We resort to maximizing its lower-bound by introducing the variational approximation $q_\theta(\mathbf{z})$ and applying Jensen's Inequality:
\begin{equation}
\label{eq:elbo_deriv_step1}
\begin{aligned}
\log P(\mathbf{y}) &= \log \mathbb{E}_{\mathbf{z} \sim q_\theta} \left[ \frac{P_{\mathrm{MLLM}}(\mathbf{y} \mid \mathbf{z}) \pi_{\mathrm{prior}}(\mathbf{z})}{q_\theta(\mathbf{z})} \right] \\
&\ge \mathbb{E}_{\mathbf{z} \sim q_\theta} \left[ \log \left( \frac{P_{\mathrm{MLLM}}(\mathbf{y} \mid \mathbf{z}) \pi_{\mathrm{prior}}(\mathbf{z})}{q_\theta(\mathbf{z})} \right) \right] \\
&= \mathcal{L}_{\mathrm{ELBO}}(\theta).
\end{aligned}
\end{equation}
Expanding the logarithmic term reveals the structural components of our objective:
\begin{equation}
\label{eq:elbo_expanded}
\mathcal{L}_{\mathrm{ELBO}}(\theta) = \mathbb{E}_{\mathbf{z} \sim q_\theta} \left[ \log P_{\mathrm{MLLM}}(\mathbf{y} \mid \mathbf{z}) \right] - D_{\mathrm{KL}}(q_\theta \| \pi_{\mathrm{prior}}).
\end{equation}
The first term represents the \textit{Task Evidence} (Likelihood), ensuring the selected frames support the correct answer. The second term serves as \textit{Semantic Regularization}, explicitly constraining the policy towards the human-derived prior $\pi_{\mathrm{prior}}$ to prevent reward hacking.

The above derivation provides a principled foundation for our reward design within the latent-variable formulation in Eq.~\ref{eq:marginal}. Under this formulation, the answer likelihood term naturally induces the token-level negative log-likelihood (NLL) over the ground-truth answer as a dense task-evidence signal. Furthermore, the KL term explicitly regularizes the policy towards the \textit{Prior} (human semantics) rather than an arbitrary previous policy, ensuring semantic consistency.

\subsubsection{The Tempered Posterior Objective}

To control the strength of the regularization and improve optimization stability, we introduce a temperature coefficient $\beta$. The final maximization objective is:
\begin{equation}
\label{eq:final_obj}
\mathcal{J}(\theta)
=
\mathbb{E}_{\mathbf{z} \sim q_\theta} [R(\mathbf{z})]
-
\beta D_{\mathrm{KL}}(q_\theta \| \pi_{\mathrm{prior}}),
\end{equation}
where $R(\mathbf{z}) \triangleq \log P_{\mathrm{MLLM}}(\mathbf{y} \mid \mathbf{z})$. We show that the optimal policy under this objective takes the form of a tempered posterior.

\begin{proposition}
The variational distribution $q^*(\mathbf{z})$ that maximizes $\mathcal{J}(\theta)$ is given by
\begin{equation}
\label{eq:tempered_posterior}
q^*(\mathbf{z})
=
\frac{1}{Z}
\pi_{\mathrm{prior}}(\mathbf{z})
\exp\left(\frac{R(\mathbf{z})}{\beta}\right),
\end{equation}
where $Z$ is the normalizing constant.
\end{proposition}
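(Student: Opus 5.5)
The plan is the standard Gibbs variational argument. We treat $q$ as an arbitrary distribution on the finite set of ordered $K$-subsets (the support of the Plackett--Luce family). We assume $\pi_{\mathrm{prior}}(\mathbf{z})>0$ wherever needed, which the Plackett--Luce parameterization with finite logits guarantees. We also take $\beta>0$. Since the space is finite and $R(\mathbf{z})=\log P_{\mathrm{MLLM}}(\mathbf{y}\mid\mathbf{z})$ is finite whenever the likelihood is positive, the normalizer
\[
Z=\sum_{\mathbf{z}}\pi_{\mathrm{prior}}(\mathbf{z})\exp\left(R(\mathbf{z})/\beta\right)
\]
is finite and positive, so $q^*$ in Eq.~\ref{eq:tempered_posterior} is a well-defined distribution.

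The key step is an algebraic rewriting of Eq.~\ref{eq:final_obj}. Dividing by $\beta$ and writing the KL term as an expectation gives
\[
\frac{1}{\beta}\mathcal{J}(q)=\mathbb{E}_{q}\left[\frac{R(\mathbf{z})}{\beta}-\log\frac{q(\mathbf{z})}{\pi_{\mathrm{prior}}(\mathbf{z})}\right].
\]
Next we substitute $\pi_{\mathrm{prior}}(\mathbf{z})\exp(R(\mathbf{z})/\beta)=Z\,q^*(\mathbf{z})$ inside the logarithm. This yields the identity
\[
\frac{1}{\beta}\mathcal{J}(q)=\log Z - D_{\mathrm{KL}}(q\,\|\,q^*).
\]
The first term does not depend on $q$. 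By Gibbs' inequality, $D_{\mathrm{KL}}(q\|q^*)\ge 0$, with equality if and only if $q=q^*$. Because $\beta>0$, $\mathcal{J}$ is therefore maximized uniquely at $q^*$, and the maximal value is $\beta\log Z$. When $\beta=1$ this recovers the ELBO of Eq.~\ref{eq:elbo_expanded}, with maximum equal to the log-evidence in Eq.~\ref{eq:marginal}. That serves as a consistency check.

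The main obstacle is the gap between optimizing over all distributions and optimizing over $\theta$. The proposition speaks of the variational distribution maximizing $\mathcal{J}(\theta)$, but the Plackett--Luce family need not contain $q^*$. I would state the result for the unrestricted optimum over distributions on the combinatorial space. I would then remark that, within the parametric family, maximizing $\mathcal{J}(\theta)$ is equivalent to minimizing $D_{\mathrm{KL}}(q_\theta\|q^*)$, by the identity above. Hence $q^*$ is attained exactly whenever it is representable, and is otherwise the target of the best reverse-KL approximation.

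A minor technical point is the case where $P_{\mathrm{MLLM}}(\mathbf{y}\mid\mathbf{z})=0$ for some $\mathbf{z}$, so that $R=-\infty$. Setting $\exp(R/\beta)=0$ there is consistent: any $q$ that puts mass on such $\mathbf{z}$ gets $\mathcal{J}=-\infty$. The argument goes through unchanged provided $Z>0$.

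As an alternative route, one could set up a Lagrangian with a normalization multiplier, differentiate in each $q(\mathbf{z})$, and use strict concavity of $\mathcal{J}$ in $q$, which follows from the convexity of KL. I would mention this only as a remark, because the KL-rewriting argument above gives uniqueness directly.
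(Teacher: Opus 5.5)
Your proof is correct and follows the same route as the paper: both rest on the identity $\frac{1}{\beta}\mathcal{J}(q)=\log Z - D_{\mathrm{KL}}(q\,\|\,q^*)$, which the paper writes as $D_{\mathrm{KL}}(q_\theta\|q^*)=-\frac{1}{\beta}\mathcal{J}(\theta)+\log Z$, and both conclude from the nonnegativity of KL. Your extra points make explicit what the paper leaves implicit: $\beta>0$, $0<Z<\infty$, uniqueness via Gibbs' inequality, and the caveat that the Plackett--Luce family need not contain $q^*$, so the statement is really about the unrestricted optimum (or about the reverse-KL target within the family).
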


\begin{proof}
Consider the KL divergence from $q_\theta$ to the tempered posterior $q^*$:
\begin{align}
D_{\mathrm{KL}}(q_\theta \| q^*)
&=
\mathbb{E}_{q_\theta}
\left[
\log q_\theta(\mathbf{z})
-
\log q^*(\mathbf{z})
\right] \notag \\
&=
\mathbb{E}_{q_\theta}
\left[
\log q_\theta(\mathbf{z})
-
\log \pi_{\mathrm{prior}}(\mathbf{z})
-
\frac{1}{\beta} R(\mathbf{z})
+
\log Z
\right] \notag \\
&=
D_{\mathrm{KL}}(q_\theta \| \pi_{\mathrm{prior}})
-
\frac{1}{\beta}
\mathbb{E}_{q_\theta}[R(\mathbf{z})]
+
\log Z \notag \\
&=
-\frac{1}{\beta}
\left(
\mathbb{E}_{q_\theta}[R(\mathbf{z})]
-
\beta D_{\mathrm{KL}}(q_\theta \| \pi_{\mathrm{prior}})
\right)
+
\log Z \notag \\
&=
-\frac{1}{\beta}\mathcal{J}(\theta)
+
\log Z .
\end{align}
Since $Z$ is independent of $q_\theta$, maximizing $\mathcal{J}(\theta)$ is equivalent to minimizing $D_{\mathrm{KL}}(q_\theta \| q^*)$. Therefore, the optimal variational distribution is the tempered posterior in Eq.~\ref{eq:tempered_posterior}.
\end{proof}

\subsection{Bayesian Group Relative Policy Optimization}

Optimization of Eq.~\ref{eq:final_obj} requires estimating gradients over discrete expectations. We derive a Bayesian variant of Group Relative Policy Optimization (GRPO). Unlike standard RL which constrains updates against a previous policy $\pi_{\mathrm{old}}$, our derivation requires specific regularization towards $\pi_{\mathrm{prior}}$.

We utilize the log-derivative trick to estimate the gradient of the expected reward:
\begin{equation}
\nabla_\theta \mathbb{E}_{q_\theta}[R(\mathbf{z})] = \mathbb{E}_{q_\theta} [R(\mathbf{z}) \nabla_\theta \log q_\theta(\mathbf{z})].
\end{equation}
To reduce variance, we employ group-based importance sampling. For each query, we sample a group of $G$ trajectories $\{\mathbf{z}_i\}_{i=1}^G$ from the behavior policy $\pi_{\mathrm{old}}$. The normalized advantage $\hat{A}_i$ is computed relative to the group mean:
\begin{equation}
\hat{A}_i = \frac{R(\mathbf{z}_i) - \frac{1}{G}\sum_{j} R(\mathbf{z}_j)}{\sigma_R + \epsilon}.
\end{equation}
In practice, $R(\mathbf{z})$ is computed only over the ground-truth answer tokens under a unified short-form answer format. The group-relative normalization in Eq.~\ref{eq:grpo_grad} makes the policy update depend on the relative ranking of sampled frame subsets for the same query. Consequently, the optimization is robust to moderate variations in answer length and output format. The final gradient estimator is given by:
\begin{equation}
\label{eq:grpo_grad}
\nabla_\theta \mathcal{J} \approx \frac{1}{G} \sum_{i=1}^{G} \bigg( \hat{A}_i \nabla_\theta \log q_\theta(\mathbf{z}_i) - \beta \nabla_\theta D_{\mathrm{KL}}(q_\theta(\mathbf{z}_i) \| \pi_{\mathrm{prior}}(\mathbf{z}_i)) \bigg).
\end{equation}
In practice, we implement this using the clipped surrogate objective to ensure monotonic improvement:
\begin{equation}
\label{eq:grpo_loss}
\mathcal{L}_{\mathrm{GRPO}}(\theta) = -\frac{1}{G} \sum_{i=1}^{G} \min\left(\rho_i \hat{A}_i, \text{clip}(\rho_i, 1-\epsilon, 1+\epsilon)\hat{A}_i\right) + \beta D_{\mathrm{KL}}(q_\theta \| \pi_{\mathrm{prior}}),
\end{equation}
where $\rho_i = q_\theta(\mathbf{z}_i) / \pi_{\mathrm{old}}(\mathbf{z}_i)$ is the importance sampling ratio. This objective completes the Bayesian loop: the advantage term $\hat{A}_i$ drives the policy towards the likelihood (Task Evidence), while the KL term anchors it to the learned prior, strictly adhering to the ELBO formulation. The complete training procedure, illustrating the transition from Prior Elicitation to Posterior Inference, is summarized in Algorithm~\ref{alg:bayes_grpo}.

\section{Experiment}
\label{sec:exp}

\begin{table*}[t]
\small
\centering
\caption{Comparison of frame selection methods on \textbf{Qwen2.5-VL-7B}. Methods are categorized by type and grouped by frame count. \textbf{Bold}/\underline{underline} denote the best/second-best performance.}
\label{tab:qwen_results}
\resizebox{\textwidth}{!}{%
\begin{tabular}{lcccccccc}
\toprule
\multirow{2}{*}{\textbf{Method}} & \multirow{2}{*}{\textbf{\# Frames}} & \multicolumn{4}{c}{\textbf{Video-MME}} & \multirow{2}{*}{\textbf{LVB}} & \multicolumn{2}{c}{\textbf{MLVU}} \\ 
\cmidrule(lr){3-6} \cmidrule(lr){8-9}
 & & Short & Medium & Long & Overall & & Needle-QA & Avg \\ 
\textit{Avg. Video Duration} & & \textit{1.3min} & \textit{9min} & \textit{41min} & \textit{17min} & \textit{12min} & \multicolumn{2}{c}{\textit{12min}} \\
\midrule  

\rowcolor{Gray} \multicolumn{9}{l}{\textit{\textbf{Uniform Sampling Baseline}}} \\
Uniform & 8 & 61.7 & 50.6 & 45.7 & 52.6 & 52.6 & 62.8 & 56.1 \\
Uniform & 32 & 71.4 & 60.8 & 49.2 & 60.5 & 56.3 & 72.4 & 62.2 \\ 
\midrule

\rowcolor{Gray} \multicolumn{9}{l}{\textit{\textbf{Lightweight Scoring Methods}}} \\
AKS & 8 & 62.1 & 51.4 & 48.8 & 54.1 & 52.0 & - & 53.7 \\
BOLT & 8 & 66.0 & 54.6 & 50.4 & 57.0 & 55.6 & - & 59.0 \\
Q-Frame & 8 & \underline{71.9} & 58.2 & 50.7 & 60.3 & 56.6 & - & 58.2 \\
F2C & 8 & \textbf{72.4} & \textbf{60.7} & \underline{51.2} & \underline{61.4} & \textbf{58.9} & - & \underline{63.1} \\
\rowcolor{light}
\textbf{Ours} & 8 & 71.2 & \underline{60.4} & \textbf{52.9} & \textbf{61.5} & \underline{58.2} & \textbf{82.0} & \textbf{66.1} \\
\cdashline{1-9}
MDP3 & 32 & - & - & - & 63.8 & 60.0 & - & 66.2 \\
AKS & 32 & 66.4 & 56.9 & 48.6 & 57.3 & 58.3 & - & 63.1 \\
BOLT & 32 & 74.3 & 64.2 & 53.8 & 64.1 & 58.6 & - & 66.3 \\
Q-Frame & 32 & 73.0 & 61.7 & 53.1 & 62.6 & 58.7 & - & - \\
F2C & 32 & \underline{73.8} & \textbf{66.3} & \textbf{56.6} & \textbf{65.6} & 60.8 & - & 66.8 \\
\rowcolor{light}
\textbf{Ours} & 32 & \textbf{74.1} & \underline{65.8} & \underline{55.7} & \underline{65.2} & \underline{61.1} & \textbf{82.5} & \textbf{70.3} \\ 
\midrule

\rowcolor{Gray} \multicolumn{9}{l}{\textit{\textbf{MLLM-based Scoring Methods}}} \\
ViaRL & 8 & 65.1 & 56.1 & 50.8 & 57.3 & - & 73.5 & 58.2 \\
Kframes & 8 & 68.9 & 55.3 & 47.9 & 57.4 & 57.7 & \underline{77.5} & 60.4 \\
\cdashline{1-9}
Kframes & 32 & \textbf{74.1} & 61.4 & 51.7 & 62.1 & 60.5 & \underline{79.4} & 65.9 \\
A.I.R. & 32 & - & - & - & 65.0 & \textbf{61.4} & - & \underline{67.5} \\ 
GenS & 54 & - & - & - & - & 58.7 & - & 64.8 \\
\bottomrule
\end{tabular}%
}
\end{table*}

\begin{table*}[t]
\small
\centering
\caption{Comparison of frame selection methods on \textbf{InternVL-3-8B} and \textbf{LLaVA-OneVision-7B}. Methods are categorized by type, grouped by model, and then by frame count. \textbf{Bold}/\underline{underline} denote the best/second-best performance.}
\label{tab:merged_results}
\resizebox{\textwidth}{!}{%
\begin{tabular}{lcccccccc}
\toprule
\multirow{2}{*}{\textbf{Method}} & \multirow{2}{*}{\textbf{\# Frames}} & \multicolumn{4}{c}{\textbf{Video-MME}} & \multirow{2}{*}{\textbf{LVB}} & \multicolumn{2}{c}{\textbf{MLVU}} \\ 
\cmidrule(lr){3-6} \cmidrule(lr){8-9}
 & & Short & Medium & Long & Overall & & Needle-QA & Avg \\ 
\textit{Avg. Video Duration} & & \textit{1.3min} & \textit{9min} & \textit{41min} & \textit{17min} & \textit{12min} & \multicolumn{2}{c}{\textit{12min}} \\
\midrule  

\rowcolor{Gray} \multicolumn{9}{l}{\textit{\textbf{Uniform Sampling Baseline}}} \\
\multicolumn{9}{l}{\textbf{InternVL-3-8B}} \\
Uniform & 8 & 68.7 & 58.3 & 48.8 & 58.6 & 53.4 & 77.2 & 64.4 \\
Uniform & 32 & 74.2 & 62.8 & 51.7 & 62.9 & 57.9 & 83.7 & 70.3 \\ 
\cdashline{1-9}
\multicolumn{9}{l}{\textbf{LLaVA-OneVision-7B}} \\
Uniform & 8 & 62.6 & 51.6 & 44.3 & 52.8 & 52.7 & 63.7 & 60.2 \\
Uniform & 32 & 69.3 & 54.7 & 46.2 & 56.7 & 55.2 & 76.6 & 64.7 \\ 
\midrule
\rowcolor{Gray} \multicolumn{9}{l}{\textit{\textbf{Lightweight Scoring Methods}}} \\
\multicolumn{9}{l}{\textbf{InternVL-3-8B}} \\
\rowcolor{light}
\textbf{Ours} & 8 & \textbf{74.7} & \textbf{60.9} & \textbf{51.6} & \textbf{62.4} & \textbf{58.7} & \textbf{85.1} & \textbf{69.0} \\
\cdashline{1-9}
MDP3 & 32 & - & - & - & \underline{66.8} & 60.9 & - & 74.0 \\
\rowcolor{light}
\textbf{Ours} & 32 & \textbf{76.2} & \textbf{65.9} & \textbf{58.4} & \underline{66.8} & \underline{62.1} & \textbf{85.4} & \underline{74.3} \\ 
\cdashline{1-9}
\multicolumn{9}{l}{\textbf{LLaVA-OneVision-7B}} \\
Frame-Voyager & 8 & 67.3 & \underline{56.3} & \underline{48.9} & 57.5 & - & - & 65.6 \\
FrameOracle & 8 & - & - & - & 57.5 & 56.0 & - & 62.9 \\
MDP3 & 8 & \textbf{71.8} & \textbf{57.6} & \textbf{49.6} & \textbf{59.6} & \textbf{59.0} & - & \textbf{69.8} \\
BOLT & 8 & 66.8 & 54.2 & 47.3 & 56.1 & 55.6 & - & 63.4 \\
Nar-KFC & 8 & - & - & - & \underline{57.8} & 56.5 & - & 66.2 \\
\rowcolor{light}
\textbf{Ours} & 8 & \underline{68.3} & \underline{56.3} & 48.2 & 57.6 & \underline{57.8} & 74.3 & \underline{67.8} \\
\cdashline{1-9}
Focus & 32 & - & - & - & 58.3 & 60.7 & - & - \\
AKS & 32 & - & - & - & 58.4 & 59.3 & - & - \\
MDP3 & 32 & - & - & - & 60.5 & 59.0 & - & 68.3 \\
BOLT & 32 & \underline{70.1} & \underline{60.0} & \underline{49.6} & 59.9 & 59.6 & - & 66.8 \\
\rowcolor{light}
\textbf{Ours} & 32 & \textbf{70.2} & \textbf{61.4} & \textbf{50.7} & \underline{60.8} & \textbf{60.8} & \textbf{76.1} & \underline{69.2} \\ 
\midrule

\rowcolor{Gray} \multicolumn{9}{l}{\textit{\textbf{MLLM-based Scoring Methods}}} \\
\multicolumn{9}{l}{\textbf{InternVL-3-8B}} \\
A.I.R. & 32 & - & - & - & \textbf{68.2} & \textbf{62.8} & - & \textbf{74.5} \\ 
\cdashline{1-9}
\multicolumn{9}{l}{\textbf{LLaVA-OneVision-7B}} \\
A.I.R. & 32 & - & - & - & \textbf{61.4} & \underline{60.7} & - & \textbf{69.3} \\ 
\bottomrule
\end{tabular}%
}
\end{table*}

\subsection{Experimental Setup}
\label{sec:setup}

\paragraph{Models and Baselines.}
We compare our approach against two distinct categories of frame selection strategies, lightweight scoring and MLLM-based methods. The first category consists of lightweight scoring methods. This includes methods utilizing efficient vision-language models, such as SigLIP2~\cite{siglip2}, to measure semantic alignment. Specific approaches include AKS~\cite{aks}, which segments videos based on adaptive similarity; BOLT~\cite{bolt}, which computes importance scores followed by inverse transform sampling; and Q-Frame~\cite{qframe}, which employs dynamic resolution based on cross-modal retrieval. F2C~\cite{f2c} utilizes CLIP-based anchor keys to adjust clip lengths during selection. Optimization-based methods like MDP3~\cite{mdp3} and Nar-KFC~\cite{narkfc} formulate selection as subset or subgraph problems using mathematical estimation. Additionally, Focus~\cite{focus} employs multi-armed bandits to balance exploration and exploitation, while Frame-Voyager~\cite{frame-voyager} and FrameOracle~\cite{frameoracle} use pairwise ranking or joint importance prediction to identify keyframe sets. The second category, MLLM-based methods, leverages heavy multimodal models for direct scoring. Examples include ViaRL~\cite{viarl} and Kframes~\cite{keyframes}, which apply reinforcement learning for temporal grounding; A.I.R.~\cite{air}, which uses a two-stage iterative selection process; and GenS~\cite{gens}, which treats frame scoring as a generative sampling task. Results are reported over multiple runs. \textbf{Efficiency and Training Cost Analysis} are provided in Appendix~\ref{app:exp_details}.

\paragraph{Benchmarks and Implementation Details.} We evaluate our method on three widely adopted long-form video benchmarks, covering diverse domains and reasoning granularities. Details about benchmarks are provided in Appendix~\ref{app:exp_details}. We utilize SigLIP2-SO400M-Patch16-512~\cite{siglip2} as the shared visual-textual backbone for feature pre-extraction. More details are provided in Appendix~\ref{app:exp_details}.

\subsection{Results on Long-form Video Benchmarks}
\label{sec:main_results}

We present the quantitative evaluation results on Video-MME, LongVideoBench, and MLVU under different frame budgets in Tables~\ref{tab:qwen_results}, \ref{tab:merged_results}, and \ref{tab:transferability}. Notably, our approach maintains competitive performance with all frame budget, demonstrating the efficacy of our keyframe selection strategy.

\paragraph{Comparison with State-of-the-Art methods.}
As evidenced in Tables~\ref{tab:qwen_results} and~\ref{tab:merged_results}, our method significantly outperforms existing lightweight selectors while achieving parity with computationally intensive MLLM-based approaches. Our model with $K=8$ frames surpasses the strong uniform baseline by a substantial margin of 8.9\% on Video-MME and 10.0\% on MLVU and outperforming lightweight methods such as BOLT and Q-Frame on Qwen2.5-VL(Table~\ref{tab:qwen_results}). Our method even achieves an overall accuracy of 66.8\% with $K=32$ which effectively matches the performance of the MLLM-based scorer A.I.R. on InternVL-3(Table~\ref{tab:merged_results}). Furthermore, compared to mathematical iterative methods like Nar-KFC and MDP3 that demand extensive test-time computation, our module offers a direct single-pass forward inference.


\paragraph{Generalization to State-of-the-Art MLLMs.}
We extend the evaluation to SOTA MLLMs: Qwen3-VL-8B-Instruct (no-thinking) and InternVL-3.5-8B (thinking) in Table~\ref{tab:transferability} to assess the generalization. On Qwen3-VL ($K=32$), our method improves the overall Video-MME score from 63.5\% to 65.3\% and the MLVU average from 65.3\% to 72.7\%. We observe that the relative margin over uniform sampling is slightly compressed compared to earlier architectures (e.g., Qwen2.5-VL), which can be attributed to the enhanced reasoning robustness of SOTA MLLMs from suboptimal visual contexts or partial evidence.

\begin{table*}[t]
\small
\centering
\caption{Transferability analysis on state-of-the-art MLLMs (\textbf{Qwen3-VL-8B} and \textbf{InternVL-3.5-8B}). We compare the original model performance (Uniform) with our selector. \textbf{Bold} denotes the best performance.}
\label{tab:transferability}
\resizebox{\textwidth}{!}{%
\begin{tabular}{lcccccccc}
\toprule
\multirow{2}{*}{\textbf{Method}} & \multirow{2}{*}{\textbf{\# Frames}} & \multicolumn{4}{c}{\textbf{Video-MME}} & \multirow{2}{*}{\textbf{LVB}} & \multicolumn{2}{c}{\textbf{MLVU}} \\
\cmidrule(lr){3-6} \cmidrule(lr){8-9}
 & & Short & Medium & Long & Overall & & Needle-QA & Avg \\
\textit{Avg. Video Duration} & & \textit{1.3min} & \textit{9min} & \textit{41min} & \textit{17min} & \textit{12min} & \multicolumn{2}{c}{\textit{12min}} \\
\midrule

\rowcolor{Gray} \multicolumn{9}{l}{\textbf{Qwen3-VL-8B}} \\
Uniform & 8 & 64.8 & 51.8 & 47.7 & 54.7 & 53.3 & 60.4 & 56.9 \\
\textbf{Ours} & 8 & \textbf{67.8} & \textbf{56.2} & \textbf{50.8} & \textbf{58.3} & \textbf{58.2} & \textbf{79.8} & \textbf{66.7} \\
\cdashline{1-9}
Uniform & 32 & 76.1 & 61.6 & 52.8 & 63.5 & 58.9 & 71.6 & 65.3 \\
\textbf{Ours} & 32 & \textbf{77.6} & \textbf{63.6} & \textbf{54.8} & \textbf{65.3} & \textbf{62.7} & \textbf{82.3} & \textbf{72.7} \\
\midrule

\rowcolor{Gray} \multicolumn{9}{l}{\textbf{InternVL-3.5-8B}} \\
Uniform & 8 & 64.7 & 55.9 & 46.9 & 55.8 & 56.7 & 75.5 & 62.7 \\
\textbf{Ours} & 8 & \textbf{70.6} & \textbf{56.2} & \textbf{48.2} & \textbf{58.3} & \textbf{59.6} & \textbf{86.8} & \textbf{67.6} \\
\cdashline{1-9}
Uniform & 32 & 72.9 & 60.0 & 50.3 & 61.1 & 60.1 & 83.1 & 68.5 \\
\textbf{Ours} & 32 & \textbf{76.0} & \textbf{63.2} & \textbf{52.9} & \textbf{64.1} & \textbf{63.4} & \textbf{85.4} & \textbf{71.4} \\
\bottomrule
\end{tabular}%
}
\end{table*}

\begin{table}[t]
\centering
\small
\caption{\textbf{Cross-environment Stage II training.}
Both selectors are evaluated on InternVL-3-8B with $K=8$ and $K=32$.
The selector trained with Qwen2.5-VL transfers directly to InternVL-3 without additional posterior training and achieves comparable performance to the selector trained with InternVL-3 itself.
\textbf{Bold} denotes the best performance.}
\label{tab:cross_env_training}
\setlength{\tabcolsep}{5.5pt}
\renewcommand{\arraystretch}{1.12}
\begin{tabular}{lcccccc}
\toprule
\multirow{2}{*}{\textbf{Stage II Env.}} 
& \multicolumn{2}{c}{\textbf{Video-MME}} 
& \multicolumn{2}{c}{\textbf{LVB}} 
& \multicolumn{2}{c}{\textbf{MLVU}} \\
\cmidrule(lr){2-3} \cmidrule(lr){4-5} \cmidrule(lr){6-7}
& $K=8$ & $K=32$ & $K=8$ & $K=32$ & $K=8$ & $K=32$ \\
\midrule

\rowcolor{Gray}
\multicolumn{7}{l}{\textbf{Evaluation: InternVL-3-8B}} \\
Qwen2.5-VL-7B 
& \textbf{62.4} & \textbf{66.8} 
& 58.7 & \textbf{62.1} 
& 69.0 & \textbf{74.3} \\
InternVL-3-8B 
& \textbf{62.4} & 66.5 
& \textbf{59.0} & 61.9 
& \textbf{69.2} & 74.2 \\

\bottomrule
\end{tabular}
\end{table}

\begin{table*}[t]
\small
\centering
\caption{\textbf{Ablation study on training strategies.} All results are evaluated using the \textbf{Qwen2.5-VL-7B} model. We investigate the impact of the Label Prior (Stage I) and the Bayesian GRPO update (Stage II) across different frame budgets ($K=8, 32$). Reward Source indicates the reward signal used during GRPO.}
\label{tab:ablation_component}
\resizebox{\textwidth}{!}{%
\begin{tabular}{l ccc | ccc ccc}
\toprule
\multirow{2}{*}{\textbf{Method}} & \multicolumn{3}{c}{\textbf{Training Components}} & \multicolumn{3}{c}{\textbf{K = 8}} & \multicolumn{3}{c}{\textbf{K = 32}} \\
\cmidrule(lr){2-4} \cmidrule(lr){5-7} \cmidrule(lr){8-10}
 & \textbf{Stage I} & \textbf{Stage II} & \textbf{Reward} & \textbf{V-MME} & \textbf{LVB} & \textbf{MLVU} & \textbf{V-MME} & \textbf{LVB} & \textbf{MLVU} \\
\midrule
SigLIP2 & \xmark & \xmark & - & 54.7 & 56.6 & 61.9 & 61.1 & 59.7 & 66.3 \\
\midrule
Label Prior & \cmark & \xmark & - & 58.5 & 57.4 & 65.2 & 63.6 & 60.4 & 68.2 \\
+ Label Reward & \cmark & \cmark & label & 58.3 & 57.6 & 64.9 & 63.8 & 60.2 & 67.6 \\
\rowcolor{Gray}
\textbf{+ Task Evidence (Ours)} & \cmark & \cmark & \textbf{NLL} & \textbf{61.5} & \textbf{58.7} & \textbf{66.1} & \textbf{65.2} & \textbf{61.1} & \textbf{70.3} \\
\bottomrule
\end{tabular}%
}
\end{table*}

\begin{table}[t]
\centering
\small
\caption{\textbf{Comparison of reward types for Stage II.}
Results are evaluated on Qwen2.5-VL-7B with $K=8$ and $K=32$.
Label reward provides medium-granularity but annotation-biased supervision.
RLVR-style correctness reward is task-level but sparse.
NLL provides dense task evidence and performs best across benchmarks.
\textbf{Bold} denotes the best performance.}
\label{tab:reward_type}
\setlength{\tabcolsep}{4.8pt}
\renewcommand{\arraystretch}{1.12}
\resizebox{\linewidth}{!}{%
\begin{tabular}{lllcccccc}
\toprule
\multirow{2}{*}{\textbf{Reward Type}} 
& \multicolumn{2}{c}{\textbf{Reward Properties}} 
& \multicolumn{3}{c}{\textbf{$K=8$}} 
& \multicolumn{3}{c}{\textbf{$K=32$}} \\
\cmidrule(lr){2-3} \cmidrule(lr){4-6} \cmidrule(lr){7-9}
& \textbf{Granularity} & \textbf{Bias}
& \textbf{V-MME} & \textbf{LVB} & \textbf{MLVU} 
& \textbf{V-MME} & \textbf{LVB} & \textbf{MLVU} \\
\midrule

\rowcolor{Gray}
\multicolumn{9}{l}{\textbf{Stage II reward signal}} \\

Label 
& Frame-level / medium 
& Annotation-biased 
& 58.3 & 57.6 & 64.9 
& 63.8 & 60.2 & 67.6 \\

RLVR-style correctness 
& Sample-level / sparse 
& Task-level 
& 57.3 & 56.5 & 65.3 
& 62.1 & 58.9 & 66.7 \\

\textbf{NLL} 
& \textbf{Token-level / dense} 
& \textbf{Task-level} 
& \textbf{61.5} & \textbf{58.7} & \textbf{66.1} 
& \textbf{65.2} & \textbf{61.1} & \textbf{70.3} \\

\bottomrule
\end{tabular}%
}
\end{table}

\subsection{Ablation Studies}
\label{sec:ablation}

\paragraph{Cross-environment Stage II Training.}
A natural concern is whether the posterior inference stage must be repeated for each downstream MLLM, since the NLL reward is obtained from a frozen MLLM environment. We compare two selectors trained with different Stage II environments and evaluate both on InternVL-3-8B. The first selector is trained with Qwen2.5-VL-7B as the frozen environment, which is our default setting. The second selector is trained by replacing the Stage II environment with InternVL-3-8B.

As shown in Table~\ref{tab:cross_env_training}, using InternVL-3-8B as the Stage II environment yields nearly identical performance to the selector trained with Qwen2.5-VL-7B. This result indicates that Stage II does not need to be re-run for each downstream MLLM in our setting. It suggests that the posterior update does not merely overfit to the idiosyncratic preference of one frozen environment but helps move the selector more useful across MLLMs.

\paragraph{Efficacy of Bayesian Posterior Update and Task Evidence.}
Table~\ref{tab:ablation_component} presents results of different training strategies on Qwen2.5-VL. The SigLIP2 baseline selects frames based on cosine similarity. Label Prior (Stage I) improves the selection quality by aligning with label saliency, yet the bias of static annotations limits the performance. Crucially, applying Reinforcement Learning with a label-based sparse reward does not align with the inference dynamics of the downstream model, and the reward signal is overly sparse, failing to produce meaningful gains over the Label Prior. In contrast, our method utilizing the Negative Log-Likelihood (NLL) as a dense reward signal achieves significant performance improvements across all metrics.

\paragraph{Impact of Frame Budget and Saturation.}
We further investigate the efficiency-performance trade-off by varying the number of selected frames $K$ from 8 to 128, as detailed in Table~\ref{tab:ablation_frames}. More experiments are provided in Appendix~\ref{app:ablation}.

\section{Conclusion}
\label{sec:conclusion}

We formulate frame selection as Bayesian inference, using Bayesian GRPO to refine annotation-derived priors with downstream task evidence. By leveraging the MLLM's negative log-likelihood as a dense reward, our lightweight selector improves video-QA accuracy with modest overhead and transfers across MLLMs. Future work will explore streaming video.

\clearpage
\section*{Acknowledgements}

This work was supported by the National Natural Science Foundation
of China (Grant No. 62471287).

\bibliographystyle{plainnat}
\bibliography{reference}






\clearpage 
\appendix

\section{Overview}

This appendix provides supplementary details to support the findings presented in the main paper. Algorithm is provided in Section~\ref{app:argo}. Implementation details, including hyperparameters, evaluation prompts, and benchmark descriptions, are provided in Section~\ref{app:exp_details}. We then describe the training dataset construction in Section~\ref{app:datasets}. Finally, we present extended ablation studies and sensitivity analyses on state-of-the-art MLLMs in Section~\ref{app:ablation}, followed by additional visualization cases in Section~\ref{app:case_study}.

\section{Algorithm}
\label{app:argo}
The complete training procedure, illustrating the transition from Prior Elicitation to Posterior Inference, is summarized in Algorithm~\ref{alg:bayes_grpo}.
\begin{algorithm}[t]
\caption{Bayesian Frame Selection via Prior-Guided GRPO}
\label{alg:bayes_grpo}
\small
\begin{algorithmic}[1]
\STATE {\bf Input:} Video dataset $\mathcal{D}$, label dataset $\mathcal{D}_{\rm label}$, frozen MLLM, budget $K$, group size $G$, KL coefficient $\beta$.
\STATE {\bf Output:} Frame-selection policy $\pi_\theta$.

\vspace{0.2em}
\STATE \textcolor{gray}{\textit{// Stage I: prior elicitation}}
\STATE Initialize policy parameters $\theta$.
\FOR{each batch $(\mathcal{V}, \mathcal{Q}, \mathbf{y}_{\rm pref}) \in \mathcal{D}_{\rm label}$}
    \STATE Compute frame scores $\mathbf{s}=f_\theta(\mathcal{V},\mathcal{Q})$.
    \STATE Update $\theta$ by minimizing the BTL ranking loss $\mathcal{L}_{\rm prior}$ in Eq.~\ref{eq:prior_loss}.
\ENDFOR
\STATE Freeze the elicited prior policy $\pi_{\rm prior} \leftarrow \pi_\theta$.

\vspace{0.2em}
\STATE \textcolor{gray}{\textit{// Stage II: posterior inference}}
\FOR{each batch $(\mathcal{V}, \mathcal{Q}, \mathbf{y}) \in \mathcal{D}$}
    \STATE Sample $G$ frame-selection trajectories $\{\mathbf{z}_i\}_{i=1}^G \sim \pi_\theta(\cdot \mid \mathbf{s})$.
    \FOR{$i=1,\ldots,G$}
        \STATE Evaluate task evidence $R(\mathbf{z}_i)=\log P_{\rm MLLM}(\mathbf{y}\mid \mathbf{z}_i,\mathcal{Q})$.
    \ENDFOR
    \STATE Normalize group-relative advantages $\{\hat{A}_i\}_{i=1}^G$.
    \STATE Compute prior regularization $D_{\rm KL}(\pi_\theta(\mathbf{z}_i)\,\|\,\pi_{\rm prior}(\mathbf{z}_i))$.
    \STATE Update $\theta$ by minimizing the Bayesian GRPO loss in Eq.~\ref{eq:grpo_loss}.
\ENDFOR
\end{algorithmic}
\end{algorithm}

\section{Implementation Details}
\label{app:exp_details}

In this section, we provide comprehensive implementation details to ensure the reproducibility of our method.


\subsection{Benchmarks.}
\textbf{Video-MME}~\cite{videomme} serves as a large-scale multimodal evaluation suite, assessing capabilities in video understanding, reasoning, and knowledge-grounded tasks. It incorporates both short and long-form videos to test context management strategies comprehensively. \textbf{LongVideoBench}~\cite{lvb} focuses specifically on long-form scenarios, with videos spanning tens of minutes to hours. It challenges models with tasks such as VQA, summarization, and temporal grounding, rigorously testing temporal reasoning efficiency. \textbf{MLVU}~\cite{mlvu} evaluates hierarchical comprehension. It tests models on multiple levels of reasoning, including frame-level recognition, clip-level temporal understanding, and video-level holistic comprehension, ensuring that the selection method preserves both local details and global context.

\subsection{Implementation Details.}
For all frame-selection methods evaluated under our controlled setting, candidate frames are uniformly sampled from the raw video at 1 FPS before selection. Given a video of duration $T$ seconds, this produces $N=T$ candidate frames, and each method selects $K$ frames from the same candidate pool. Unless otherwise specified, lightweight baselines including AKS, BOLT, Q-Frame, F2C, and our method use the same SigLIP2-SO400M features, ensuring that performance differences are not caused by different feature extractors.The lightweight selector consists of $L=4$ blocks of our proposed Video-Text Fusion Transformers. The hidden dimension is set to $D=1152$ to align with the visual and textual features. We train the selector on the GenS-Video-150K dataset with 150k video-question pairs. The training is conducted in two stages. In Stage I, we optimize the pairwise ranking loss with a learning rate of $1e^{-4}$. In Stage II, we fine-tune the selector using Bayesian GRPO with a learning rate of $5e^{-6}$. The Stage II environment is Qwen2.5-VL-7B, which is kept frozen throughout training. After training, the selector is directly deployed to other downstream MLLMs without additional posterior training. All training and evaluation experiments are conducted on 8 NVIDIA H100 (80GB) GPUs.

\subsection{Hyperparameters of Bayesian GRPO}
We detail the hyperparameters used in both label prior elicitation (Stage I) and task evidence inference (Stage II) in Table~\ref{tab:hyperparams}. The training was conducted on 8 NVIDIA H100 GPUs.

\begin{table}[h]
    \centering
    \caption{Hyperparameter settings for the two-stage training process.}
    \label{tab:hyperparams}
    \begin{tabular}{lcc}
    \toprule
    \textbf{Hyperparameter} & \textbf{Stage I: Label Prior} & \textbf{Stage II: Task Evidence} \\
    \midrule
    Batch Size & 32 & 1 \\
    Learning Rate & $1e^{-4}$ & $5e^{-6}$ \\
    Optimizer & AdamW & AdamW \\
    Weight Decay & $0.01$ & $0.01$ \\
    Epochs & 10 & 1 \\
    KL Coefficient ($\beta$) & - & 0.2 \\
    Group Size ($G$) & - & 8 \\
    PPO Epochs & - & 3 \\
    Clip Ratio ($\epsilon$) & - & 0.2 \\
    \bottomrule
    \end{tabular}
\end{table}

\subsection{Efficiency and Training Cost Analysis}
\label{sec:efficiency}

A central goal of our method is to preserve the inference efficiency of lightweight selectors while avoiding the heavy test-time overhead of MLLM-based scorers. We therefore report both computational cost and measured latency under a common setting. Following the evaluation protocol, candidate frames are sampled at 1 FPS. For a 300-second video with $K=32$, we estimate the selector-side computational cost with fp16 model weights and report the peak memory usage.

\begin{table}[t]
\centering
\caption{\textbf{Selector-side efficiency comparison.}
GFLOPs and peak memory are measured on a 300-second video sampled at 1 FPS with $K=32$.
BOLT, F2C, and ours use the same SigLIP2-SO400M feature extractor.}
\label{tab:efficiency}
\small
\setlength{\tabcolsep}{6pt}
\renewcommand{\arraystretch}{1.12}
\begin{tabular}{llcc}
\toprule
\textbf{Method} & \textbf{Encoder / Scorer} & \textbf{GFLOPs} & \textbf{Mem.} \\
 & & & \textbf{(GB)} \\
\midrule
\multicolumn{4}{l}{\textit{Lightweight feature-based scoring}} \\
\midrule
BOLT & SigLIP2-SO400M & 240 & 0.80 \\
F2C  & SigLIP2-SO400M & 241 & 0.80 \\
\textbf{Ours} & SigLIP2-SO400M + selector & \textbf{264} & \textbf{0.93} \\
\midrule
\multicolumn{4}{l}{\textit{MLLM-based scoring}} \\
\midrule
A.I.R. & CLIP-L + Qwen2.5-VL-7B & 72K--431K & 15.81 \\
GenS   & Aria-25B-A3.9B          & 315K      & 50.60 \\
\bottomrule
\end{tabular}
\end{table}

As shown in Table~\ref{tab:efficiency}, our selector introduces only a small overhead over SigLIP2-based lightweight methods, increasing the estimated cost from 240--241 GFLOPs to 264 GFLOPs and memory from 0.80GB to 0.93GB. In contrast, MLLM-based scorers require repeated large-model inference over candidate frames or windows, leading to orders-of-magnitude higher cost.

We further measure wall-clock latency on Video-MME using a single NVIDIA H100 GPU, with $K=32$ and candidate frames sampled at 1 FPS. The average per-sample latency is reported in Table~\ref{tab:latency}. Our method remains close to lightweight training-free methods and is substantially faster than generative MLLM-based scoring.

\begin{table}[t]
\centering
\small
\caption{\textbf{Average per-sample inference latency on Video-MME.} Latency is measured on a single NVIDIA H100 GPU with $K=32$ and 1 FPS candidate sampling.}
\label{tab:latency}
\begin{tabular}{lc}
\toprule
\textbf{Method} & \textbf{Latency (s/sample)} \\
\midrule
BOLT & 1.29 \\
Ours & 1.38 \\
Q-Frame & 1.53 \\
GenS & 214.6 \\
\bottomrule
\end{tabular}
\end{table}

Although our method is not training-free, the training cost is incurred only once. Stage I learns the label prior from keyframe annotations, and Stage II uses a frozen Qwen2.5-VL-7B environment for posterior refinement. After training, the selector is reused as a standalone module and directly transferred to different downstream MLLMs without re-running Stage II. Therefore, the training cost is amortized across benchmarks, frame budgets, and downstream MLLMs, while inference requires only one selector forward pass for each video-query pair.

\subsection{Evaluation Details}

 We utilize the following structured prompt to evaluate MLLMs across all video benchmarks. The temperature is set to 0.0 to ensure deterministic generation. Inference is performed using \texttt{transformers==4.57}, and accuracy is calculated using Exact Match.

\begin{center}
\setlength{\fboxsep}{8pt} 
\fcolorbox{gray!50}{gray!10}{
    \begin{minipage}{0.95\linewidth} 
        \textbf{Evaluation Prompt Template}\\[0.5em] 
        \ttfamily\small 
        \textless Selected Frames\textgreater \\
        You are a helpful assistant. \\
        \textless Questions\textgreater \\
        \textless Options\textgreater \\
        Answer with the option's letter from the given choices directly.
    \end{minipage}%
}
\end{center}

\section{Dataset Details}
\label{app:datasets}

We utilize the GenS-Video-150K~\cite{gens} dataset for training. The dataset annotation pipeline leverages GPT-4o to construct grounded video QAs and score frame relevance from 0 (irrelevant) to 5 (most relevant). To ensure comprehensive coverage of video understanding capabilities, the dataset includes 12 specific question types categorized into three primary domains: (1) \textbf{Reasoning Tasks}, which test the model's ability to infer logical relationships involving objects, actions, spatial configurations, and temporal changes; (2) \textbf{Perception Tasks}, focusing on the basic visual understanding of scenes, attributes, and spatial layouts; and (3) \textbf{Specialized Tasks}, which assess fine-grained capabilities including video detail referring, counting, OCR, and temporal sequence perception.

\section{More Experimental Results}
\label{app:ablation}

In this section, we provide additional ablation studies and sensitivity analyses on \textbf{Qwen3-VL-8B} and \textbf{InternVL-3.5-8B}. These experiments further validate the effectiveness and generalizability of our proposed Bayesian GRPO framework across different model architectures.

\subsection{Ablation on Training Components on SOTA Models}

We investigate the contribution of each training stage on Qwen3-VL and InternVL-3.5. Table~\ref{tab:ablation_extended1} and Table~\ref{tab:ablation_extended2} demonstrates that while the Label Prior (Stage I) provides a reasonable initialization, the Bayesian GRPO update (Stage II) with NLL-based Task Evidence is essential for maximizing performance.

\begin{table*}[h]
\centering
\caption{\textbf{Extended Ablation Study on Qwen3-VL-8B.} Comparison of training strategies across different frame budgets ($K=8, 32$).}
\label{tab:ablation_extended1}
\resizebox{\textwidth}{!}{%
\begin{tabular}{l ccc | ccc ccc}
\toprule
\multirow{2}{*}{\textbf{Method}} & \multicolumn{3}{c}{\textbf{Training Components}} & \multicolumn{3}{c}{\textbf{K = 8}} & \multicolumn{3}{c}{\textbf{K = 32}} \\
\cmidrule(lr){2-4} \cmidrule(lr){5-7} \cmidrule(lr){8-10}
 & \textbf{Stage I} & \textbf{Stage II} & \textbf{Reward} & \textbf{V-MME} & \textbf{LVB} & \textbf{MLVU} & \textbf{V-MME} & \textbf{LVB} & \textbf{MLVU} \\
\midrule
SigLIP2 & \xmark & \xmark & - & 55.4 & 56.7 & 61.0 & 62.1 & 59.8 & 69.2 \\
Label Prior & \cmark & \xmark & - & 57.3 & 58.1 & 64.3 & 63.9 & 61.4 & 71.0 \\
+ Label Reward & \cmark & \cmark & Label & 57.1 & 57.4 & 64.9 & 64.1 & 61.0 & 71.4 \\
\rowcolor{Gray}
\textbf{+ Task Evidence} & \cmark & \cmark & \textbf{NLL} & \textbf{58.3} & \textbf{58.2} & \textbf{66.7} & \textbf{65.3} & \textbf{62.7} & \textbf{72.7} \\
\bottomrule
\end{tabular}%
}
\end{table*}

\begin{table*}[h]
\centering
\caption{\textbf{Extended Ablation Study on InternVL-3.5-8B.} Comparison of training strategies across different frame budgets ($K=8, 32$).}
\label{tab:ablation_extended2}
\resizebox{\textwidth}{!}{%
\begin{tabular}{l ccc | ccc ccc}
\toprule
\multirow{2}{*}{\textbf{Method}} & \multicolumn{3}{c}{\textbf{Training Components}} & \multicolumn{3}{c}{\textbf{K = 8}} & \multicolumn{3}{c}{\textbf{K = 32}} \\
\cmidrule(lr){2-4} \cmidrule(lr){5-7} \cmidrule(lr){8-10}
 & \textbf{Stage I} & \textbf{Stage II} & \textbf{Reward} & \textbf{V-MME} & \textbf{LVB} & \textbf{MLVU} & \textbf{V-MME} & \textbf{LVB} & \textbf{MLVU} \\
\midrule
SigLIP2 & \xmark & \xmark & - & 56.7 & 58.1 & 64.8 & 60.9 & 60.6 & 69.7 \\
Label Prior & \cmark & \xmark & - & 57.6 & 58.9 & 66.2 & 62.4 & 62.3 & 70.7 \\
+ Label Reward & \cmark & \cmark & Label & 57.3 & 58.1 & 66.5 & 62.8 & 62.7 & 70.3 \\
\rowcolor{Gray}
\textbf{+ Task Evidence} & \cmark & \cmark & \textbf{NLL} & \textbf{58.3} & \textbf{59.6} & \textbf{67.6} & \textbf{64.1} & \textbf{63.4} & \textbf{71.4} \\
\bottomrule
\end{tabular}%
}
\end{table*}

\subsection{Sensitivity Analysis: Frame Budget on SOTA Models}

We analyze the impact of the frame budget $K$ on Qwen2.5-VL, Qwen3-VL and InternVL-3.5, both models exhibit a consistent trend: performance improves rapidly as $K$ increases from 8 to 32 and saturates around $K=64$. This confirms that our selector effectively identifies the most critical evidentiary frames within a constrained budget, regardless of the downstream MLLM architecture.

\begin{table}[t]
\centering
\caption{Performance with varying frame budgets ($K$) on Video-MME, LongVideoBench, and MLVU. All results are evaluated using \textbf{Qwen2.5-VL}.}
\label{tab:ablation_frames}
\begin{tabular}{lccccc}
\toprule
\textbf{Metric} & \textbf{$K=8$} & \textbf{$K=16$} & \textbf{$K=32$} & \textbf{$K=64$} & \textbf{$K=128$} \\
\midrule
Video-MME & 61.5 & 63.7 & 65.2 & 66.2 & 66.9 \\
LVB & 58.2 & 59.8 & 61.1 & 62.4 & 63.3 \\
MLVU (Avg) & 66.1 & 68.7 & 70.3 & 73.3 & 73.4 \\
\bottomrule
\end{tabular}
\end{table}

\begin{figure*}[h]
    \centering
    \begin{minipage}[t]{0.48\textwidth}
        \centering
        \captionof{table}{Performance with varying frame budgets ($K$) on \textbf{Qwen3-VL-8B}.}
        \label{tab:ablation_qwen3}

        \renewcommand{\arraystretch}{1.2} 
        \begin{tabular*}{\linewidth}{@{\extracolsep{\fill}}lccccc}
            \toprule
            \textbf{Metric} & \textbf{8} & \textbf{16} & \textbf{32} & \textbf{64} & \textbf{128} \\ 
            \midrule
            Video-MME & 58.3 & 59.7 & 65.3 & 65.6 & 67.2 \\ 
            LVB       & 58.2 & 59.2 & 62.7 & 63.7 & 63.9 \\ 
            MLVU      & 66.7 & 70.0 & 72.7 & 76.3 & 77.7 \\ 
            \bottomrule
        \end{tabular*}

        \includegraphics[width=\linewidth]{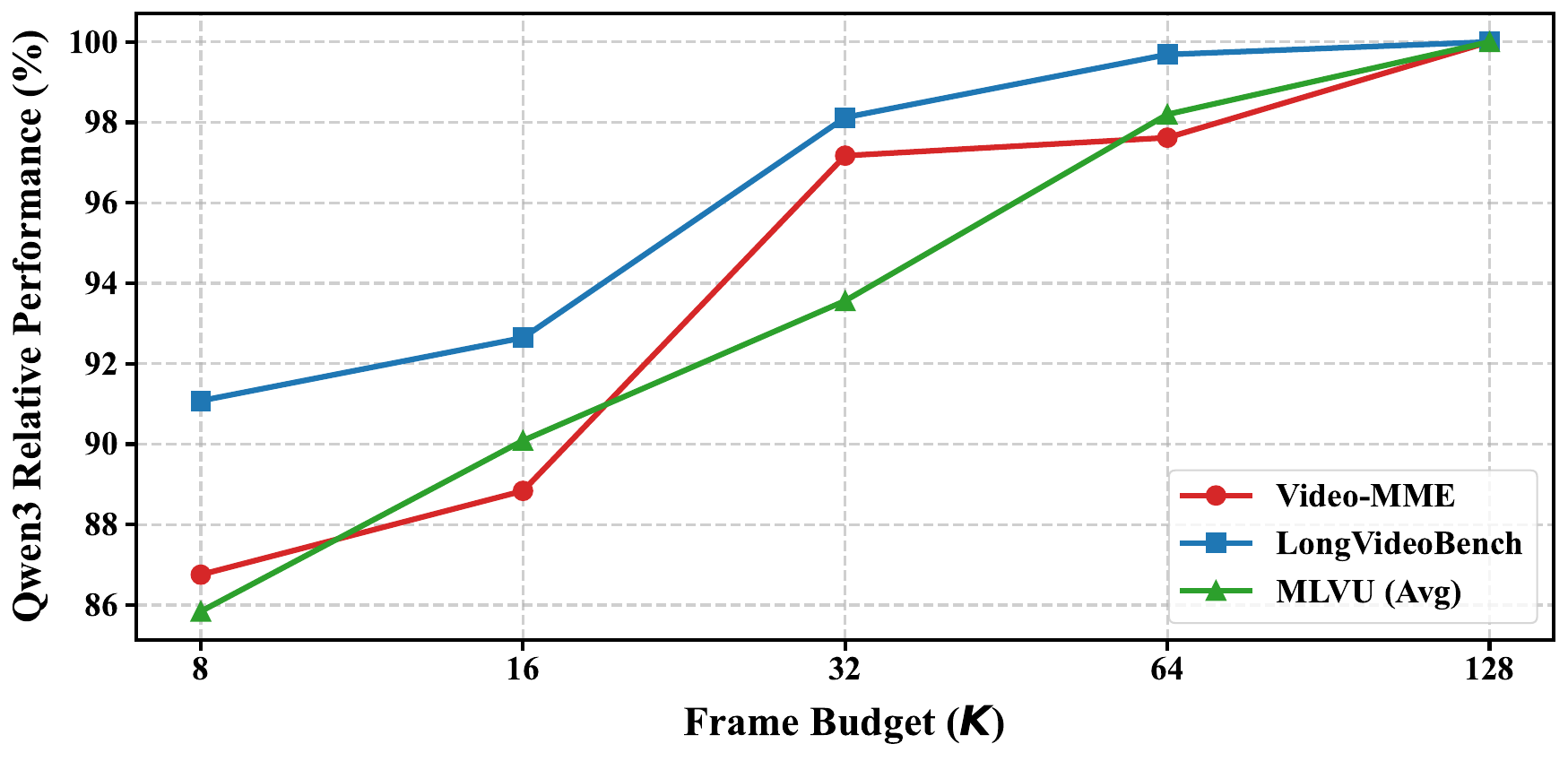} 
        \captionof{figure}{Relative performance trend on \textbf{Qwen3-VL-8B}.}
    \end{minipage}
    \hfill
    \begin{minipage}[t]{0.48\textwidth}
        \centering
        \captionof{table}{Performance with varying frame budgets ($K$) on \textbf{InternVL3.5-8B}.}
        \label{tab:ablation_internvl35}

        \renewcommand{\arraystretch}{1.2} 
        \begin{tabular*}{\linewidth}{@{\extracolsep{\fill}}lccccc}
            \toprule
            \textbf{Metric} & \textbf{8} & \textbf{16} & \textbf{32} & \textbf{64} & \textbf{128} \\ 
            \midrule
            Video-MME & 58.3 & 59.6 & 64.1 & 63.7 & 64.5 \\ 
            LVB       & 59.6 & 60.7 & 63.4 & 62.7 & 63.3 \\ 
            MLVU      & 67.6 & 69.8 & 71.4 & 73.1 & 72.3 \\ 
            \bottomrule
        \end{tabular*}
        \includegraphics[width=\linewidth]{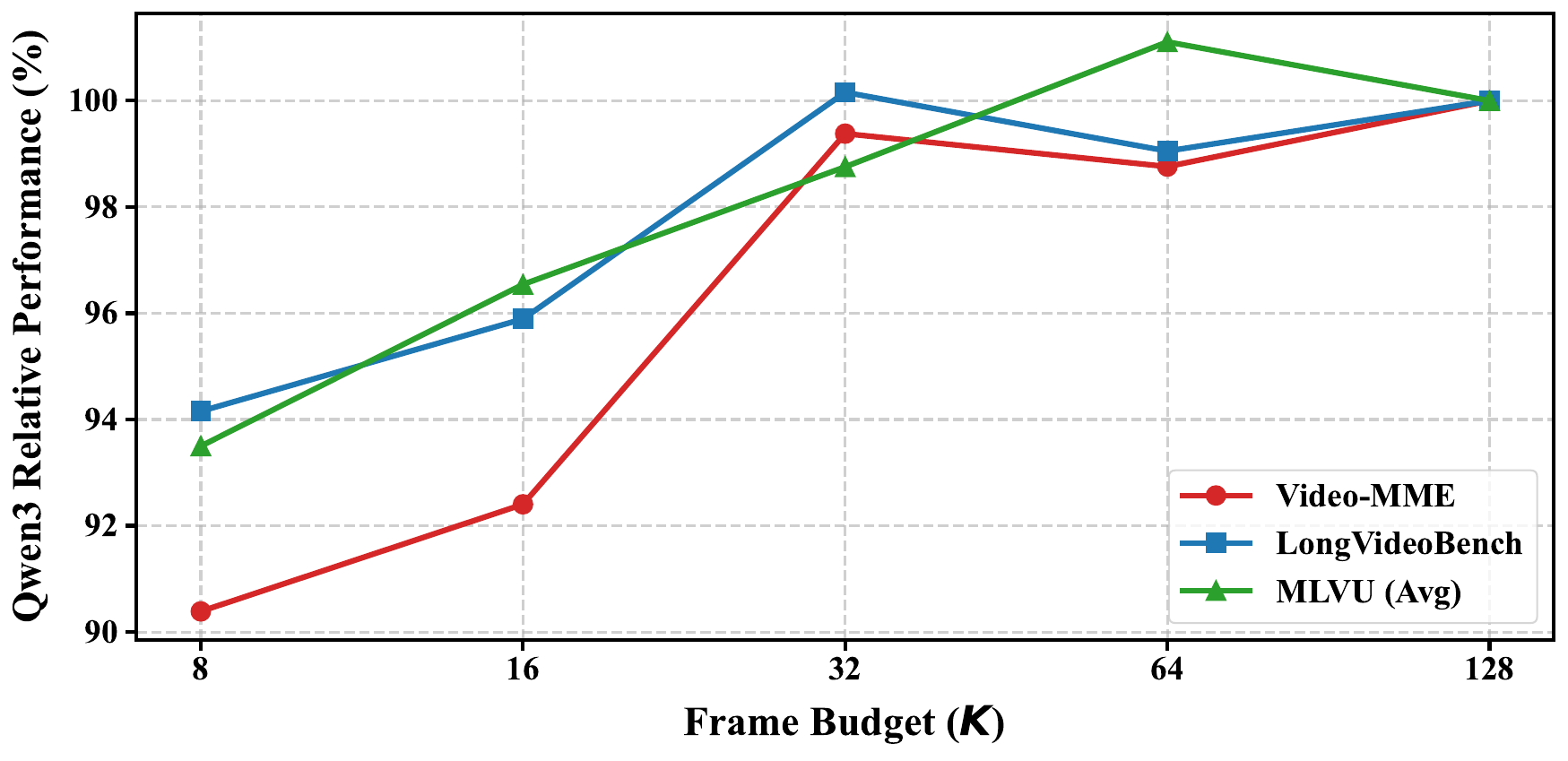}
        \captionof{figure}{Relative performance trend on \textbf{InternVL-3.5-8B}.}
    \end{minipage}
    \label{fig:ablation_sota}
\end{figure*}

\begin{figure*} 
    \centering
    \begin{subfigure}{0.95\textwidth}
        \centering
        \includegraphics[width=\linewidth]{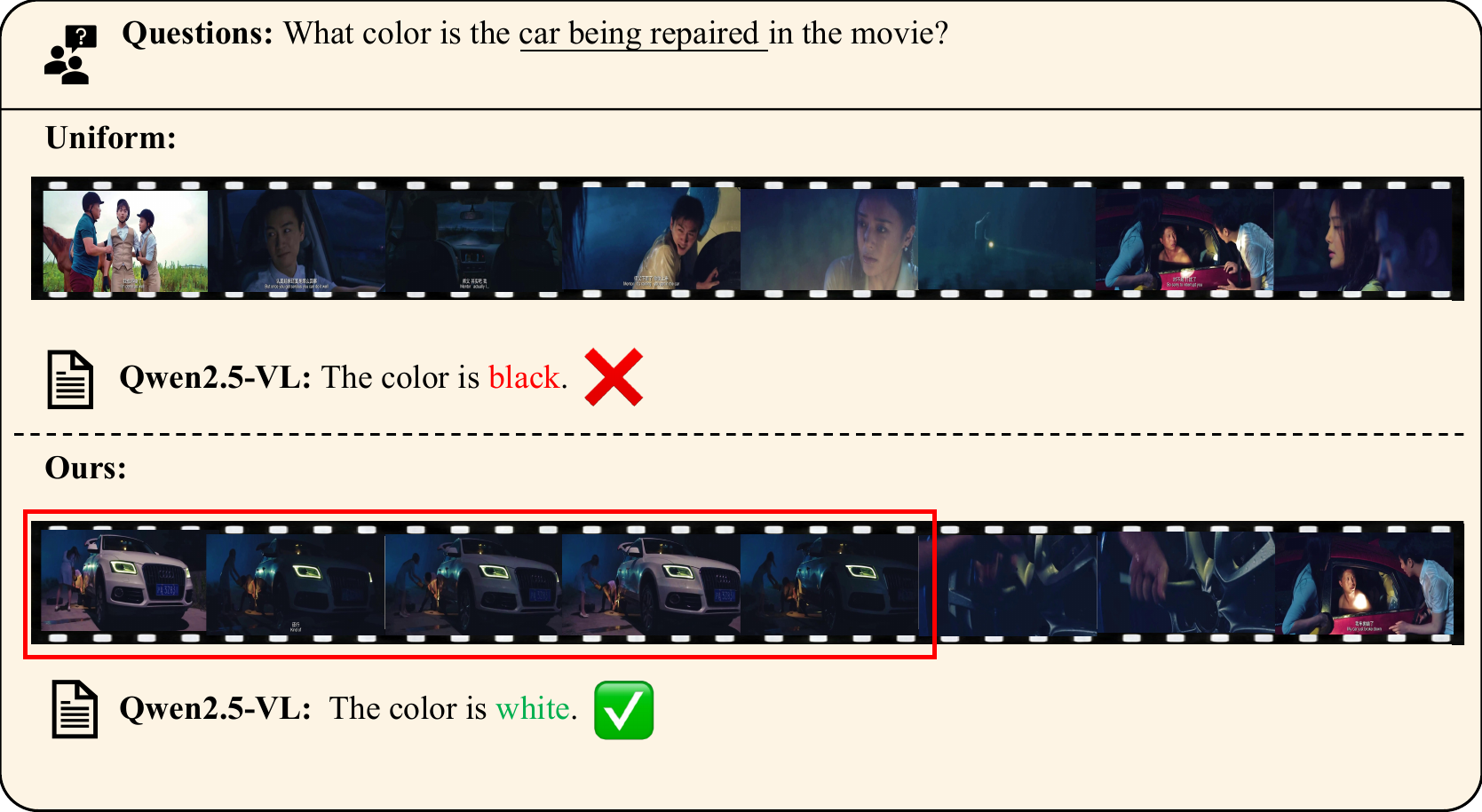}
        \label{fig:case1}
    \end{subfigure}

    \caption{\textbf{Qualitative Examples 1 of Frame Selection.} From top to bottom, we visualize the frames selected by our method compared to the baseline.}
    \label{fig:case_study_full2}
\end{figure*}

\section{More Visualization Results}
\label{app:case_study}

We provide additional visualization cases in Figure~\ref{fig:case_study_full1}, Figure~\ref{fig:case_study_full2}. These examples further illustrate how our Bayesian GRPO selector effectively identifies evidentiary frames that align with the reasoning requirements of the MLLM, contrasting with the static label priors.

\begin{figure*} 
    \centering
    \begin{subfigure}{0.95\textwidth}
        \centering
        \includegraphics[width=\linewidth]{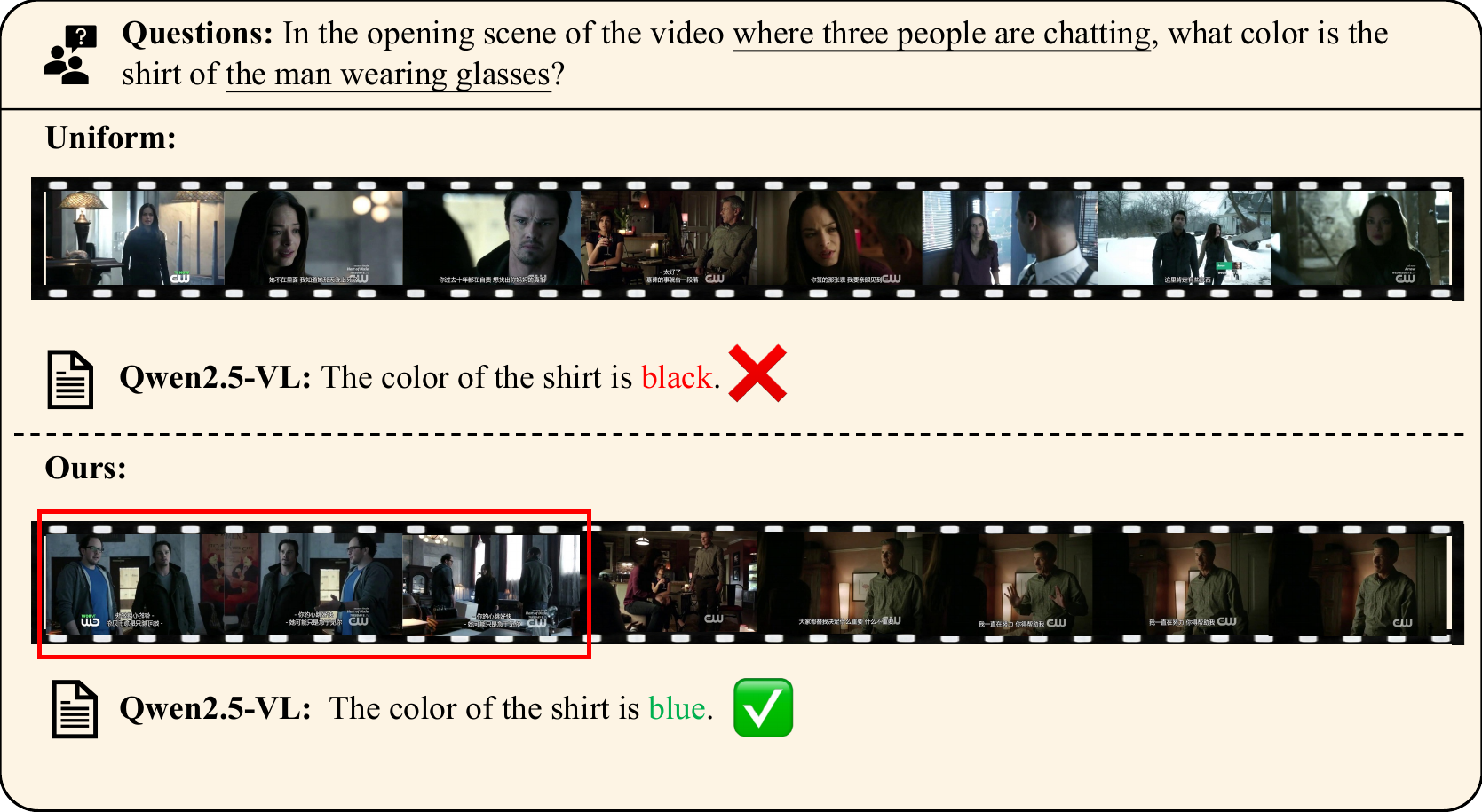}
        \label{fig:case1}
    \end{subfigure}

    \begin{subfigure}{0.95\textwidth}
        \centering
        \includegraphics[width=\linewidth]{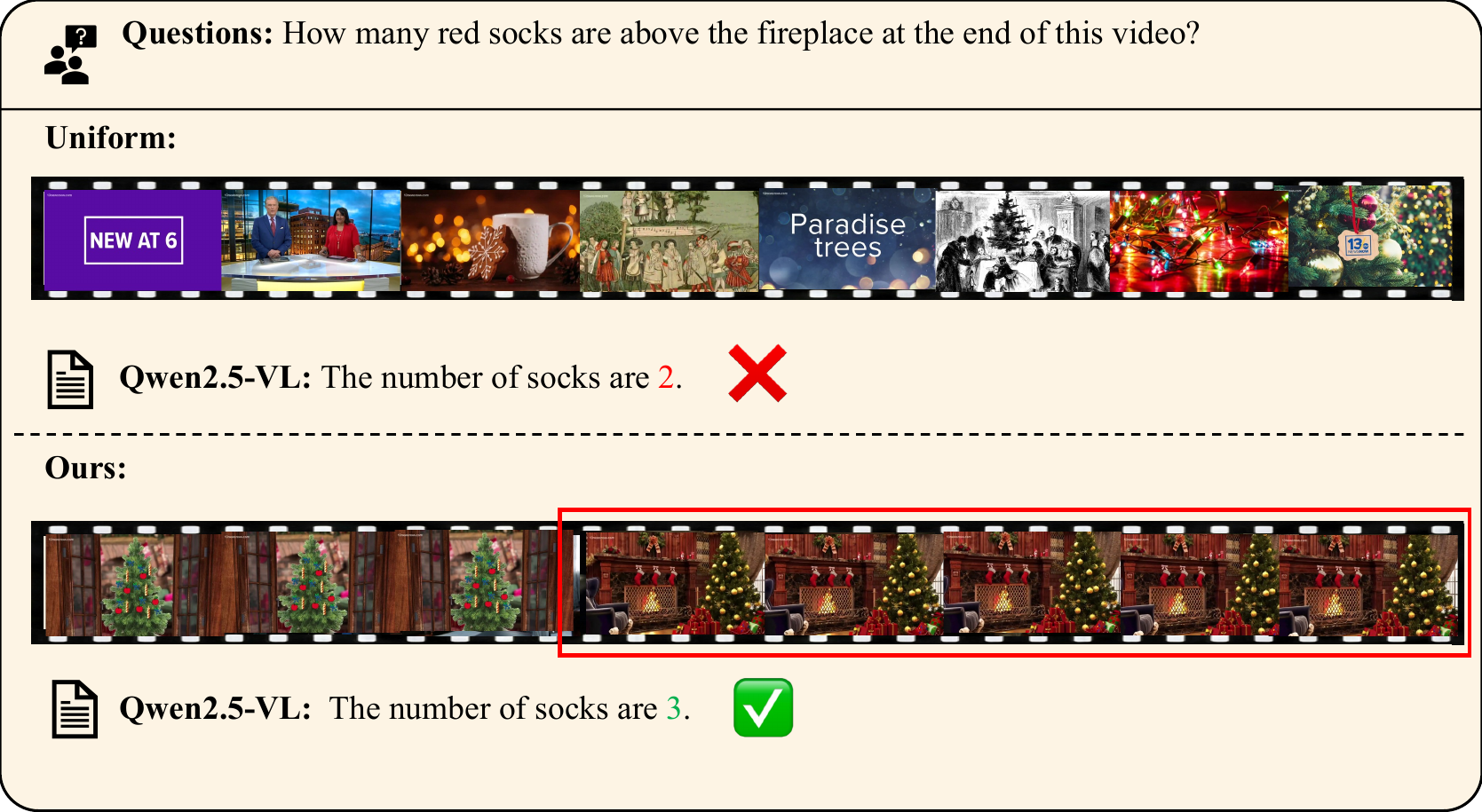}
        \label{fig:case2}
    \end{subfigure}

    \caption{\textbf{Qualitative Examples 2 and 3 of Frame Selection.} From top to bottom, we visualize the frames selected by our method compared to the baseline.}
    \label{fig:case_study_full1}
\end{figure*}


\clearpage

\end{document}